\icmltitlerunning{Safe Exploration in Continuous Action Spaces}
\newtheorem{prop}{Proposition}
\newtheorem{remark}{Remark}
\DeclareMathOperator*{\argmin}{arg\,min}
\DeclareMathOperator*{\argmax}{arg\,max}
\newcommand{\cImm}[2]{c_i({#1},{#2})}
\newcommand{\cNext}[1]{\bar{c}_i({#1})}
\begin{document}

\twocolumn[
\icmltitle{Safe Exploration in Continuous Action Spaces}


%
\begin{icmlauthorlist}
	\icmlauthor{Gal Dalal}{te,go}
	\icmlauthor{Krishnamurthy Dvijotham}{go}
	\icmlauthor{Matej Vecerik}{go}
	\icmlauthor{Todd Hester}{go}
	\icmlauthor{Cosmin Paduraru}{go}
	\icmlauthor{Yuval Tassa}{go} 
\end{icmlauthorlist}

\icmlaffiliation{te}{Technion, Israel Institute of Technology}
\icmlaffiliation{go}{Google DeepMind}

\icmlkeywords{Safe Reinforcement Learning, Deep Reinforcement Learning, Safety Layer, Deep Deterministic Policy Gradient}
\icmlcorrespondingauthor{Gal Dalal}{gald@campus.technion.ac.il}
\vskip 0.3in
]

\printAffiliationsAndNotice{}

\begin{abstract}
	We address the problem of deploying a reinforcement learning (RL) agent on a physical system such as a datacenter cooling unit or robot, where critical constraints must never be violated. We show how to exploit the typically smooth dynamics of these systems and enable RL algorithms to never violate constraints during learning. Our technique is to directly add to the policy a safety layer that analytically solves an action correction formulation per each state. The novelty of obtaining an elegant closed-form solution is attained due to a linearized model, learned on past trajectories consisting of arbitrary actions. This is to mimic the real-world circumstances where data logs were generated with a behavior policy that is implausible to describe mathematically; such cases render the known safety-aware off-policy methods inapplicable. We demonstrate the efficacy of our approach on new representative physics-based environments, and prevail where reward shaping fails by maintaining zero constraint violations.
\end{abstract}

\section{Introduction}
\label{sec:introduction}
	
	In the past two decades, RL has been mainly explored in toy environments \cite{sutton1998reinforcement} and video games \cite{mnih2015human}, where real-world applications were limited to a few typical use-cases such as recommender systems \cite{shani2005mdp}.  However, RL is recently also finding its path into industrial applications in the physical world; e.g., datacenter cooling \cite{evans2016deepmind}, robotics \cite{gu2016deep}, and autonomous vehicles \cite{sallab2017deep}. In all these use-cases, safety is a crucial concern: unless safe operation is addressed thoroughly and ensured from the first moment of deployment, RL is deemed incompatible for them. 
	
	 In real-world applications such as the above, constraints are an integral part of the problem description, and never violating them is often a strict necessity. Therefore, in this work, we define our goal to be maintaining zero-constraint-violations throughout the whole learning process. Note that accomplishing this goal for discrete action spaces is more straightforward than for continuous ones. For instance, one can pre-train constraint-violation classifiers on offline data for pruning unsafe actions. However, in our context, this goal becomes considerably more challenging due to the infinite number of candidate actions. Nevertheless, we indeed manage to accomplish this goal for continuous action spaces and show to never violate constraints throughout the whole learning process.
	
	Specifically, we tackle the problem of safe control in physical systems, where certain observable quantities are to be kept constrained. To illustrate, in the case of datacenter cooling, temperatures and pressures are to be kept below respective thresholds at all times; a robot must not exceed limits on angles and torques; and an autonomous vehicle must always maintain its distance from obstacles above some margin. In this work, we denote these quantities as \emph{safety signals}. As these are physical quantities, we exploit their smoothness for avoiding unexpected, unsafe operation.  Moreover, we deal with the common situation where offline logged data are available; it thus can be used to pre-train models for aiding safety from the initial RL deployment moments.
	
	Safe exploration, as depicted above, traditionally requires access to data generated with some known behavior policy upon which gradual safe updates are performed; see \cite{thomas2015safe} for a comprehensive study of such off-policy methods. Such data are necessary because, unless assumed otherwise, actions in one state might have catastrophic consequences down the road. Hence, long-term behavior should be inferred in advance prior to deployment. In contrast, in this work, we eliminate the need for behavior-policy knowledge as we focus on physical systems, whose actions have relatively short-term consequences. Obviating behavior-policy knowledge is a key benefit in our work, as lack of such data is a challenging yet familiar real-world situation.
	It is rarely the case that past trajectories in complex systems were generated using a consistent behavior policy that can be mathematically described. Such systems are traditionally controlled by humans or sophisticated software whose logic is hard to portray. Hence, off-policy RL methods are deemed inapplicable in such situations. Contrarily, we show how single-step transition data can be efficiently exploited for ensuring safety. To demonstrate our method's independence of a behavior policy, in our experiments we generate our pre-training data with purely random actions.
	
	Our approach relies on one-time initial pre-training of a model that predicts the change in the safety signal over a single time step. This model's strength stems from its simplicity: it is a first-order approximation with respect to the action, where its coefficients are the outputs of a state-fed neural network (NN).  We then utilize this model in a safety layer that is composed directly on top the agent's policy to correct the action if needed; i.e., after every policy query, it solves an optimization problem for finding the minimal change to the action such that the safety constraints are met. Thanks to the linearity with respect to actions, the solution can be derived analytically in closed-form and amounts to basic arithmetic operations. Thus, our safety layer is both differentiable and has a trivial three-line software implementation. Note that relating to our safety mechanism as a `safety layer' is purely a semantical choice; it merely is a simple calculation that is not limited to the nowadays popular deep policy networks and can be applied to any continuous-control algorithm (not necessarily RL-based).
	
	\section{Related Work}
	\label{sec:related work}
	
		As this work focuses on control problems with continuous state and action spaces, we limit our comparison to the literature on safe RL in the context of policy optimization that attempts to maintain safety also during the learning process. Such an example is \cite{achiam2017constrained}, where constrained policy optimization was solved with a modified trust-region policy gradient. There, the algorithm's update rule projected the policy to a safe feasibility set in each iteration. Under some policy regularity assumptions, it was shown to keep the policy within constraints in expectation. As such, it is unsuitable to our use-cases, where safety must be ensured for all visited states.
		Another recent work \cite{berkenkamp2017safe} described control-theoretic conditions under which safe operation can be guaranteed for a discretized deterministic control framework. An appropriate Lyapunov function was identified for policy attraction regions if certain Lipschitz continuity conditions hold. Though under the appropriate conditions safe exploration was guaranteed, knowledge on the specific system was required. Moreover, a NN may not be Lipschitz continuous with a reasonable coefficient.
		Lastly, very recent work \cite{pham2017optlayer} utilized an in-graph QP solver first introduced in \cite{amos2017optnet}. It exhibited an approach similar in nature to the one we take: solve an optimization problem in the policy-level, for ensuring safety on a state-wise basis. However, two main differences are to be noticed. First, the solution there relied on an in-graph implementation of a full QP solver \cite{amos2017optnet}, which runs an iterative interior-point algorithm with each forward propagation. This is both a challenge to implement (currently only a pytorch version is available), and computationally expensive. An additional difference from our work is that in \cite{pham2017optlayer} expert knowledge was required to explicitly hand-design the physical constraints of a robotic arm. Contrarily, in our method no such manual effort is needed; these dynamics are learned directly from data, while also being behavior-policy independent.
		
		To summarize, this work is the first, to our knowledge, to solve the problem of state-wise safety directly at the policy level, while also doing it in a data-driven fashion using arbitrary data logs. Moreover, it can be applied to any continuous-control algorithm; it is not restricted to a specific RL algorithm or any at all.

	\section{Definitions}
	We consider a special case of constrained Markov decision processes (CMDP) \cite{altman1999constrained}, where observed \emph{safety signals} should be kept bounded. First, let us denote by $[K]$ the set $\{1,\dots,K\},$ and by $\left[x\right]^+$ the operation $\max\{x,0\},$ where $x\in \mathbb{R}.$ A CMDP is a tuple $(\mathcal{S},\mathcal{A},P,R,\gamma,\mathcal{C})$, where $\mathcal{S}$ is a state space, $\mathcal{A}$ is an action space, $P: \mathcal{S} \times \mathcal{A} \times \mathcal{S} \rightarrow [0,1]$ is a transition kernel, $R: \mathcal{S} \times \mathcal{A} \rightarrow \mathbb{R}$ is a reward function, $\gamma\in(0,1)$ is a discount factor, and $\mathcal{C} = \{c_i: \mathcal{S} \times \mathcal{A} \rightarrow \mathbb{R}\mid i\in [K]\}$ is a set of \emph{immediate-constraint} functions. Based on that, we also define a set of safety signals $\bar{\mathcal{C}} = \{\bar{c}_i: \mathcal{S} \rightarrow \mathbb{R}\mid i\in [K]\}.$ 
	These are per-state observations of the immediate-constraint values, which we introduce for later ease of notation. To illustrate, if $c_1(s,a)$ is the temperature in a datacenter to be sensed after choosing $a$ in $s$, $\bar{c}_1(s')$ is the same temperature sensed in $s'$ after transitioning to it. 
	In the type of systems tackled in this work, $P$ is deterministic and determines $f$ s.t. $s'=f(s,a)$. Thus, we have $\bar{c}_i(s') \triangleq c_i(s,a).$ For a general non-deterministic transition kernel, $\bar{c}_i(s')$ can be defined as the expectation over $s' \sim P(\cdot|s,a).$ 
	Lastly, let policy $\mu:{\cal S} \rightarrow \mathcal{A}$
	 be a stationary mapping from states to actions.
	
	\section{State-wise Constrained Policy Optimization}
	We study safe exploration in the context of policy optimization, where at each state, all safety signals $\cNext{\cdot}$ are upper bounded by corresponding constants $C_i \in \mathbb{R}$:
	\begin{align}
	\label{eq:constrained policy optimization}
		&\max_\theta \mathbb{E}[\sum_{t=0}^\infty \gamma^t R(s_t,\mu_\theta(s_t))] \\
		&~\mbox{s.t.} \quad \cNext{s_t}\leq C_i ~~ \forall i \in [K] \enspace, \nonumber
	\end{align}
	 where $\mu_\theta$ is a parametrized policy.
	 
	 We stress that our goal is to ensure state-wise constraints not only for the solution of \eqref{eq:constrained policy optimization}, but also for its optimization process. This goal might be intractable in general since for an arbitrary MDP some actions can have a long-term effect in terms of possible state paths. However, for the types of physical systems we consider it is indeed plausible that safety constraints can be ensured by adjusting the action in a single (or few) time step(s). In the context of our real-world use-cases, cooling system dynamics are governed by factors such as the first-order heat-transfer  differential equation \cite{goodwine2010engineering}, and the second-order Newton differential equation that governs the water mass transfer. The latter also governs the movement of a robotic arm or a vehicle on which one applies forces. In these types of control problems, it is feasible to satisfy state-wise constraints even in the presence of inertia given reasonable slack in the choice of $C_i$. We expand on this further and provide evidence in Section~\ref{sec:experiments}. 

	\section{Linear Safety-Signal Model}
	\label{sec:linearized model}
	
	Solving \eqref{eq:constrained policy optimization} is a difficult task, even for the types of systems listed above. A major contributor to this challenge is the RL agent's intrinsic need  to explore for finding new and improved actions. Without prior knowledge on its environment, an RL agent initialized with a random policy cannot ensure per-state constraint satisfaction during the initial training stages. This statement also holds when the reward is carefully shaped to penalize undesired states: for an RL agent to learn to avoid undesired behavior it will have to violate the constraints enough times for the negative effect to propagate in our dynamic programming scheme.
	
	In this work, we thus incorporate some basic form of prior knowledge, based on single-step dynamics. Single-step transition data in logs is rather common, and, as explained before, more realistic compared to also knowing behavior policies. We do not attempt to learn the full transition model, but solely the immediate-constraint functions $\cImm{s}{a}.$	
	While it is attractive to simply approximate them with NNs that take $(s,a)$ as inputs, we choose a more elegant approach that comes with significant advantages listed in Subsection~\ref{sec: additional loss}. Namely, we perform the following linearization:
	  \[
	  	\cNext{s'} \triangleq \cImm{s}{a} \approx \cNext{s} + g(s;w_i)^\top a,
	  \]
	   where $w_i$ are weights of a NN, $g(s;w_i),$ that takes $s$ as input and outputs a vector of the same dimension as $a$. This model is a first-order approximation to $\cImm{s}{a}$ with respect to $a;$ i.e., an explicit representation of sensitivity of changes in the safety signal to the action using features of the state. See Fig.~\ref{fig:linearization} for a visualization.
	   
	   \begin{remark}
	   	 Linear approximations of non-linear physical systems prove accurate and are well accepted in many fields, e.g. aircraft design \cite{liang2013nonlinear}. For a comprehensive study see \cite{enqvist2005linear}.
	   \end{remark}
	   \begin{figure}
	   		\begin{center}
	   			\includegraphics[scale=0.63]{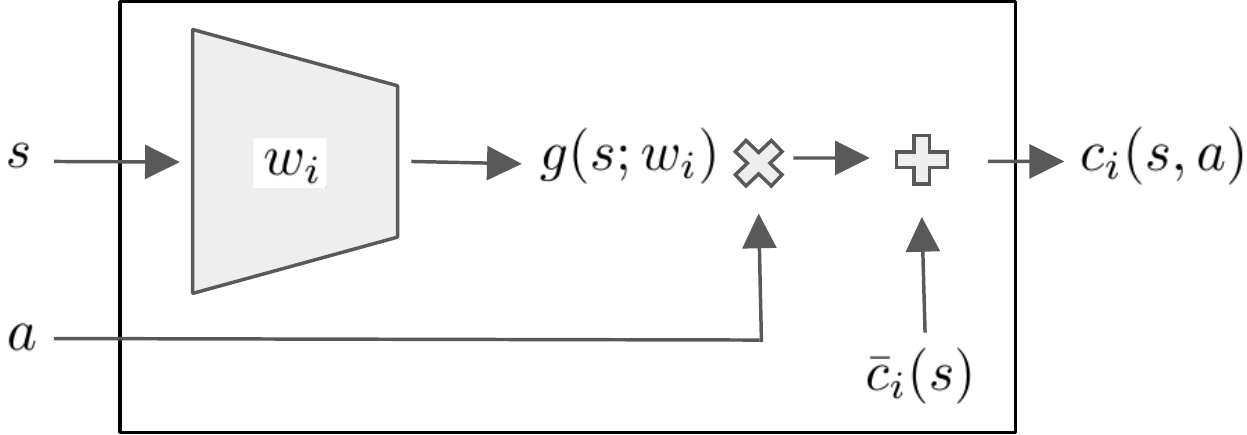}
	   		\end{center}
   			\caption{Each safety signal $\cImm{s}{a}$ is approximated with a linear model with respect to $a$, whose coefficients are features of $s,$ extracted with a NN.}
  				\label{fig:linearization}
	   \end{figure}
	   
	   Given a policy-oblivious set of tuples $D=\{(s_j,a_j,s_j')\}$, we train $g(s;w_i)$ by solving
	   \begin{align}
	   \label{eq:pre-training}
	   		\argmin_{w_i} \sum_{(s,a,s') \in D}\left(\cNext{s'} - (\cNext{s} + g(s;w_i)^\top a)\right)^2,
	   \end{align}
	   where $\cNext{s}$ is assumed to be included in $s$. 
	   
	  In our experiments, to generate $D$  we merely initialize the agent in a uniformly random location and let it perform uniformly random actions for multiple episodes. The episodes terminate when a time limit is reached or upon constraint violation. The latter corresponds to real-world mechanisms governing production systems: high-performance, efficient control is often backed up by an inefficient conservative policy; when unsafe operation flags are raised, a preemption mechanism is triggered and the conservative policy kicks in to ensure constraint satisfaction.
	  
	  Training $g(s;w_i)$ on $D$ is performed once per task as a pre-training phase that precedes the RL training. However, additional continual training of $g(s;w_i)$ during RL training is also optional. Since in our experiments continual training showed no benefit compared to solely pre-training, we only show results of the latter.
	   
	  \section{Safety Layer via Analytical Optimization }
	  \label{sec:optimization layer}
	  
	  We now show how to solve problem \eqref{eq:constrained policy optimization} using the policy gradient algorithm \cite{baxter2001infinite} via a simple addition to the policy itself. We experiment with Deep Deterministic Policy Gradient (DDPG) \cite{lillicrap2015continuous} whose policy network directly outputs actions and not their probabilities. However, our approach is not limited to it and can be added, as is, to probabilistic policy gradient or any other continuous-control algorithm.
	  
	   Denote by $\mu_\theta(s)$ the deterministic action selected by the deep policy network. Then, on top of the policy network we compose an additional, last layer, whose role is to solve 
	    \begin{align} 
	   \label{eq:optimization layer}
	   &\argmin_{a}\tfrac{1}{2}\|a-\mu_\theta(s)\|^2 \\
	   &~\mbox{s.t.}~ \cImm{s}{a} \leq C_i ~~\forall i\in [K] \enspace. \nonumber
	   \end{align}
	   This layer, which we refer to as \emph{safety layer}, perturbs the original action as little as possible in the Euclidean norm in order to satisfy the necessary constraints. Fig.~\ref{fig:safety layer} visualizes its relation to the policy network.
	   
	    \begin{figure}
	   	\begin{center}
	   		\includegraphics[scale=0.43]{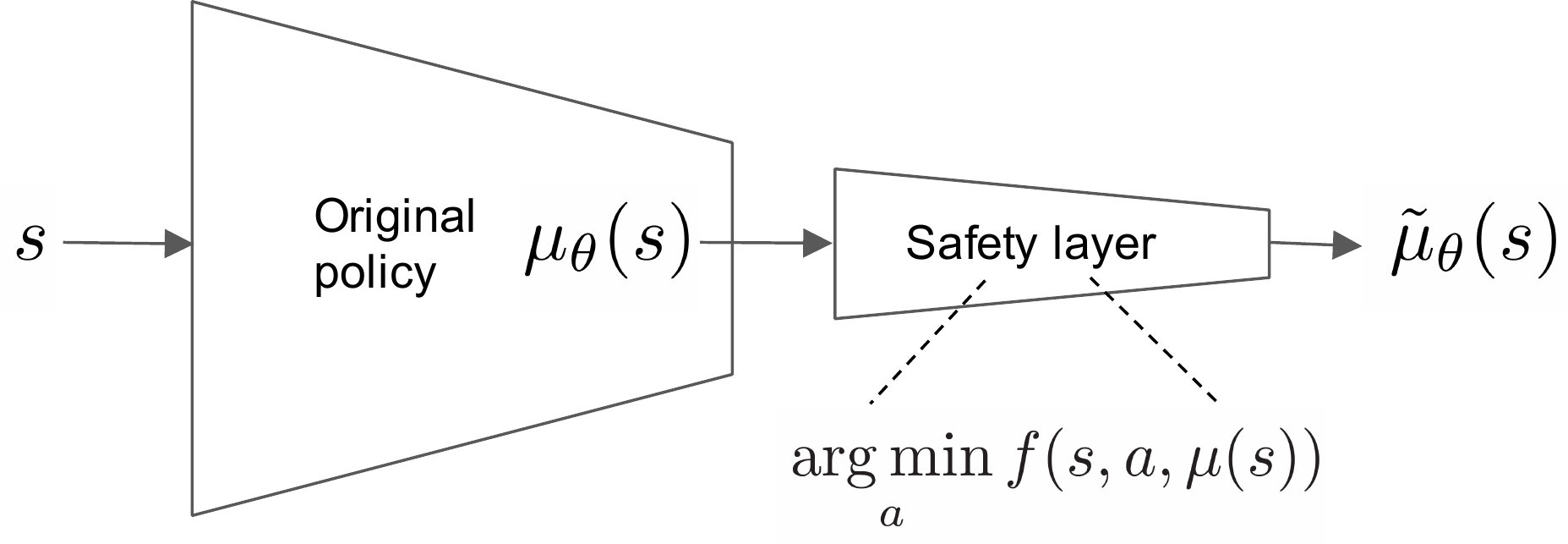}
	   	\end{center}
	   	\caption{A safety layer is composed on top of a deep policy network, solving an action correction optimization program with each forward-propagation. Our linearized safety-signal model allows a closed-form solution $\tilde{\mu}_\theta(s) = \argmin_a f(s,a,\mu(s))$ that reduces to a trivial linear projection.}
	   	\label{fig:safety layer}
	   \end{figure}

	   
	   To solve \eqref{eq:optimization layer} we now substitute our linear model for $\cImm{s}{a},$ introduced in Section~\ref{sec:linearized model}, and obtain the quadratic program
	   \begin{align} 
	   \label{eq:linearized optimization layer}
	   a^*=&\argmin_{a}\tfrac{1}{2}\|a-\mu_\theta(s)\|^2 \\
	   &~\mbox{s.t.}~ \cNext{s} + g(s;w_i)^\top a \leq C_i ~~ \forall i\in [K]\enspace. \nonumber
	   \end{align}
	   
	   Thanks to the positive-definite quadratic objective and linear constraints, we can now find the global solution to this convex problem. Generally, to solve it one can implement an in-graph iterative QP-solver such as in \cite{amos2017optnet}. This would result in a method similar to the one in \cite{pham2017optlayer}, but with the advantage that all the physical constraint model is learned directly from data instead of being hand-designed. Alternatively, if the number of active constraints is known to be bounded by some $m\leq K$, one can exhaustively iterate on all $\binom{K}{m}$ combinations of possibly active constraints and select the optimal feasible one; this would be reasonable for a small $m$. 
	   
	    However, in this work, at the expense of one simplifying assumption, we gain the  benefit of obtaining a closed-form analytical solution to \eqref{eq:linearized optimization layer} that has a trivial three-lines-of-code software implementation. The assumption is that no more than a single constraint is active at a time. As demonstrated in our experiments, this is reasonable to assume when an agent navigates in a physical domain and avoids obstacles. As the distance from each obstacle is modeled as a separate constraint, only a single obstacle is the closest one at a time. Proximity to corners shows to pose no issues, as can be seen in the plots and videos in Section~\ref{sec:experiments}. Moreover, for other systems with multiple intersecting constraints, a joint model can be learned. For instance, instead of treating distances from two walls as two constraints, the minimum between them can be treated as a single constraint. In rudimentary experiments, this method produced similar results to not using it, when we grouped two constraints into one, in the first and simplest task in Section~\ref{sec:experiments}. However, jointly modeling the dynamics of more than a few safety signals with a single $g(\cdot;\cdot)$ network is a topic requiring careful attention, which we leave for future work.
	    
	    We now provide the closed-form solution to \eqref{eq:linearized optimization layer}.		
	  \begin{prop}
	  	\label{prop:analytical solution}
	  	Assume there exists a feasible solution to \eqref{eq:linearized optimization layer} denoted by $(a^*,\{\lambda^*_i\}_{i=1}^K)$, where $\lambda^*_i$ is the optimal Lagrange multiplier associated with the $i$-th  constraint. Also, assume $|\{ i | \lambda^*_i > 0 \}| \leq 1;$ i.e., at most one constraint is active. Then 
	  	\begin{equation}
	  	\label{eq:optimal multiplier}
	  		\lambda^*_i = \left[\frac{g(s;w_i)^\top \mu_\theta(s) + \cNext{s} - C_i}{g(s;w_i)^\top g(s;w_i)}\right]^+
	  	\end{equation}
	  	and
	  	\begin{equation}
	  	\label{eq:optimal action}
	  		a^* = \mu_\theta(s) - \lambda_{i^*}^*g(s;w_{i^*}),
	  	\end{equation}
	  	where $i^* = \argmax_i \lambda_i^*.$
	  \end{prop}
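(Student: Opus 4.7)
Program (\ref{eq:linearized optimization layer}) is a strictly convex quadratic program with affine constraints, so the KKT conditions are necessary and sufficient for its global optimum. The plan is to write the Lagrangian, extract the stationarity condition, collapse the resulting sum using the single-active-constraint hypothesis, and then pin down the surviving multiplier via complementary slackness.

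I would begin with
\[
L(a,\lambda)=\tfrac{1}{2}\|a-\mu_\theta(s)\|^2+\sum_{i\in[K]}\lambda_i\bigl(\cNext{s}+g(s;w_i)^\top a-C_i\bigr).
\]
Setting $\nabla_a L = 0$ yields the primal--dual identity $a^*=\mu_\theta(s)-\sum_i \lambda_i^*\, g(s;w_i)$. Invoking the assumption $|\{i:\lambda_i^*>0\}|\le 1$, at most one summand on the right survives; labeling its index (if one exists) by $i^*$, this is already equation (\ref{eq:optimal action}).

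Next, complementary slackness at $i^*$ forces $\bar c_{i^*}(s)+g(s;w_{i^*})^\top a^*-C_{i^*}=0$ whenever $\lambda^*_{i^*}>0$. Substituting the stationarity expression for $a^*$ into this equality and isolating $\lambda^*_{i^*}$ delivers precisely the ratio inside the brackets of (\ref{eq:optimal multiplier}). The outer $[\,\cdot\,]^+$ then enforces dual feasibility ($\lambda^*_i\ge 0$) and simultaneously absorbs the degenerate case in which that ratio is nonpositive: there the candidate multiplier vanishes and $a^*=\mu_\theta(s)$ is already feasible for every constraint.

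The step I expect to be the main obstacle is justifying the selection rule $i^*=\argmax_i\lambda_i^*$. Applied one constraint at a time, formula (\ref{eq:optimal multiplier}) is really the signed magnitude of the projection of $\mu_\theta(s)$ onto the $i$-th half-space boundary; the single-active-constraint hypothesis is what promotes this argmax into the true active index, since the projection onto the most binding half-space must land inside every other half-space — otherwise more than one constraint would be active at the joint optimum. I would verify primal feasibility of the $K-1$ non-selected constraints directly from this hypothesis and then appeal to sufficiency of KKT for strictly convex QPs to conclude that $(a^*,\{\lambda_i^*\})$ is the unique global optimum.
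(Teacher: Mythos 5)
Your proposal follows essentially the same route as the paper's proof: write the Lagrangian, use stationarity to get $a^*=\mu_\theta(s)-\sum_i\lambda_i^*g(s;w_i)$, collapse the sum via the single-active-constraint assumption, and recover $\lambda^*_{i^*}$ from complementary slackness, with the $[\cdot]^+$ absorbing the inactive and all-inactive cases. The extra worry you raise about justifying $i^*=\argmax_i\lambda_i^*$ is handled in the paper simply by observing that for the inactive indices the slack is negative so the bracketed ratio is clipped to zero, which matches $\lambda_i^*=0$; no further argument is needed since the proposition assumes the optimal multipliers already satisfy the at-most-one-active hypothesis.
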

  \begin{proof}
  	As the objective function and constraints in \eqref{eq:linearized optimization layer} are convex, a sufficient condition for optimality of a feasible solution $(a^*,\{\lambda^*_i\}_{i=1}^K)$ is for it to satisfy the KKT conditions. The Lagrangian of \eqref{eq:linearized optimization layer} is
  	\begin{align*}
  	L(a,&\lambda) = \\
  	&\tfrac{1}{2}\|a-\mu_\theta(s)\|^2 + \sum_{i=1}^K \lambda_i \left(\cNext{s} + g(s;w_i)^\top a - C_i\right);
  	\end{align*}
  	and hence the KKT conditions at $(a^*,\{\lambda^*_i\}_{i=1}^K)$ are
  	\begin{align}
  		\nabla_a L =  a^*-\mu_\theta(s) + \sum_{i=1}^K \lambda^*_i g(s;w_i) &= \mathbf{0}, \label{eq:Lagrangigan gradient} \\
  		\lambda_i^* \left(\cNext{s} + g(s;w_i)^\top a^* - C_i\right) &= 0~ \forall i\in [K]~. \label{eq:comp slackness}
  	\end{align}
  	 First, consider the case where $|\{ i | \lambda^*_i > 0 \}| = 1,$ i.e., $\lambda^*_{i^*}>0.$ We then easily get \eqref{eq:optimal action} from \eqref{eq:Lagrangigan gradient}.
  	 Next, from \eqref{eq:comp slackness} we have that  $\bar{c}_{i^*}(s) + g(s;w_{i^*})^\top a^* - C_{i^*}=0.$ Substituting \eqref{eq:optimal action} in the latter gives $\lambda^*_{i^*}g(s;w_{i^*})^\top g(s;w_{i^*}) = g(s;w_{i^*})^\top \mu_\theta(s) + \bar{c}_{i^*}(s) - C_{i^*}.$ This gives us \eqref{eq:optimal multiplier} when $i=i^*$. As for $i \in [K] \setminus \{i^*\},$ the corresponding constraints are inactive since $\lambda^*_i=0.$ Hence,  $\cNext{s} + g(s;w_i)^\top a^* - C_i < 0,$ making the fraction in \eqref{eq:optimal multiplier} negative, which indeed results in a value of $0$ due to the $[\cdot]^+$ operator and gives us \eqref{eq:optimal multiplier} also when $i \in [K] \setminus \{i^*\}.$
  	 
  	 To conclude the proof, consider the second case where $\lambda_i^* = 0 ~\forall i \in [K].$ From \eqref{eq:Lagrangigan gradient} we have that $a^*=\mu_\theta(s).$ This gives us \eqref{eq:optimal action} since $\lambda^*_{i^*}=0.$ Lastly, \eqref{eq:optimal multiplier} holds due to the same inactive constraints argument as above, this time uniformly $\forall i \in [K].$
  \end{proof}

	The solution \eqref{eq:optimal action} is essentially a linear projection of the original action $\mu_\theta(s)$ to the ``safe'' hyperplane with slope $g(s;w_{i^*})$ and intercept $\bar{c}_{i^*}(s) - C_{i^*}$. In terms of implementation, it consists of a few primitive arithmetic operations: vector products followed by a 'max' operation. The benefits of its simplicity are three-fold: i) it has a trivial, almost effortless software implementation; ii) its computational cost is negligible; and iii) it is differentiable (almost everywhere, as is ReLu).
	   \subsection{An Alternative: Additional Loss Term} 
	   \label{sec: additional loss}
	   
	   To stress the prominence of our linear model in solving \eqref{eq:optimization layer}, we now briefly describe the drawbacks of an alternative likely choice we initially experimented with: a straight-forward $(s,a)$-fed NN model for $\cImm{s}{a}$. In this case, an approximate solution to \eqref{eq:optimization layer} can be obtained by penalizing the objective for constraint violations and solving the unconstrained surrogate
	   \begin{align} 
	   \label{eq:penalized optimization layer}
	   \argmin_{a}\bigg\{\tfrac{1}{2}\|a-\mu_\theta(s)\|^2 + \sum_{i=1}^K \lambda_i \left[\cImm{s}{a} - C_i\right]^+\bigg\},
	   \end{align}
	   where $\{\lambda_i>0\}$ are now hyper-parameters. Problem~\eqref{eq:penalized optimization layer} can be solved numerically using gradient descent. However, even though from our experience this approach indeed works (corrects actions to safe ones), it is inferior to our analytic approach for several reasons:
	   \begin{enumerate}
	   	\item Running  gradient descent with every policy query (i.e. forward propagation) requires sophisticated in-graph implementation and is computationally intensive.
	   	\item Since the sensitivity of $\cImm{s}{a}$ varies for different entries in $a$, different orders of magnitude are observed in entries of the resulting gradient $\nabla_a\cImm{s}{a}$. This causes numerical instabilities and long convergence times, and requires careful stepsize selection.
	   	\item Solutions to the non-convex \eqref{eq:penalized optimization layer} are local minima, which depend on non-reliable convergence of an iterative optimization algorithm, as opposed to the closed-form  global optimum we obtain for \eqref{eq:linearized optimization layer}.
	   	\item There are $K$ hyper-parameters necessitating tuning.
	   \end{enumerate}
   
   \section{Experiments}
   \label{sec:experiments}
   Per each task introduced next, we run the initial pre-training phase described in Section~\ref{sec:linearized model}. We construct $D$ with 1000 random-action episodes per task. We then add our pre-trained safety layer to the policy network. As mentioned earlier, our RL algorithm of choice for the experiments is DDPG \cite{lillicrap2015continuous}. In this section we show that during training, DDPG never violates constraints and converges faster compared to without our addition.
   
   To mimic the physics-based use-cases described in this work, where continuous safety signals are observations of the state ought to be constrained, we set up appropriate simulation domains in Mujoco \cite{todorov2012mujoco}. In these domains, an object is located in some feasible bounded region. Each of the constraints, therefore, lower bounds the object's distance to each of the few boundaries. Though the lower bound on the distance is zero by the tasks' definition, in practice we set it to be some small positive value to allow slack for avoidance actions in the presence of inertia. In all simulations, the episode immediately terminates in the case of a constraint violation. These conditions comply with the ones in our real-world examples: a datacenter cooling system's maximal temperature set for the formulation and algorithm need not be the one originally defined by the datacetner operator; a lower, more conservative value can be set to allow for some slack. Episode termination corresponds to preemption, followed by a swap to some backup heuristic each time the conservative temperature cap is reached.

   We now introduce our two new Mujoco domains: Ball and Spaceship, consisting of two tasks each. The dynamics of Ball and Spaceship are governed by first and second order differential equations, respectively. As such, they are representative domains to the systems of interest in this paper.
   
  \subsection{Ball Domain}
  In the Ball domain, the goal is to bring a ball as close as possible to a changing target location, by directly setting the velocity of the ball every $4$-th time-step (it is common for the operational frequency of a typical torque controller to be less than the environment's one). Each episode lasts at most $30$ seconds, during which the target is  appearing in a new, uniformly random location every $2$ seconds. Let us now define the $d$-dimensional $[a,b]$-cube $B_{[a,b]}^{(d)}=\{x|a\leq x_i \leq b,~i=1,\dots,d\}.$  The feasible region for the ball is $B_{[0,1]}^{(d)},$ while for the target it is $B_{[0.2,0.8]}^{(d)}.$ If the ball steps out of $B_{[0,1]}^{(d)},$ the episode terminates. To allow some slack in maneuvering away from the boundaries, we set $C_i$ in  \eqref{eq:optimization layer} so as to constrain the ball's feasible region to effectively be $B_{[0.1,0.9]}^{(d)}.$ This is for the safety layer to start correcting the actions once the ball steps out of the latter cube. Let the ball location, ball velocity and target location respectively be $x_B,v_B,x_T \in \mathbb{R}^d.$ Then, $s=(x_B,v_B,x_T + \epsilon_d),$ where $\epsilon_d \sim \mathcal{N}(0,0.05 \cdot I^{d\times d});$ i.e., only a noisy estimate of the target is observed. The action $a=v_B$; i.e., it is to set the velocity of the ball. Actions are taken every $4$ time-steps and remain constant in between. The dynamics are governed by Newton's laws, with a small amount of damping. The reward has a maximum of $1$ when the ball is exactly at the target and quickly diminishes to $0$ away from it: $R(s,a)=\left[1-10\cdot\|x_B - x_T\|_2^2\right]^+.$ Lastly, $\gamma=0.99$. Our experiments are conducted on two tasks: Ball-1D where $d=1$, Ball-3D where $d=3.$ Images of the two can be found in Fig.~\ref{fig:ball domain}.
  
  As the ball's velocity is controlled directly, its dynamics are governed by a first-order differential equation. This domain thus represents phenomena such as heat transition and several known control tasks such as maintaining the speed of a rotating engine, whether it is a classic steam engine or cruise-control in modern vehicles  \cite{sotomayor2006stability}.

	\begin{figure}
		\centering
		\includegraphics[scale=0.29]{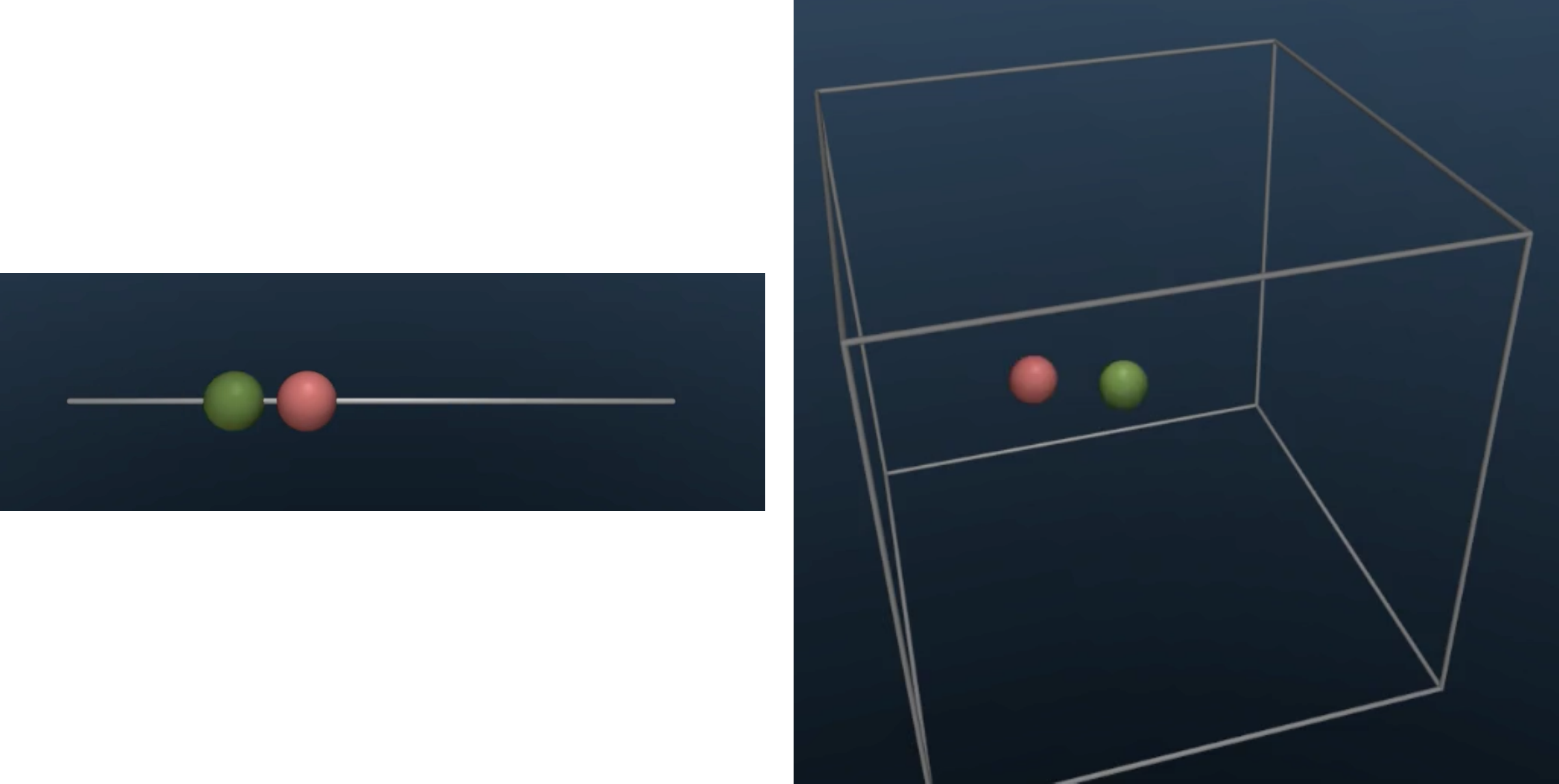}
		\caption{Ball-1D (left) and Ball-3D (right) tasks. The goal is to keep the green ball as close as possible to the pink target ball by setting its velocity. The safe region is the $[0,1]$ interval in Ball-1D and the $[0,1]$ cube in Ball-3D; if the green ball steps out of it, the episode terminates.}
		\label{fig:ball domain}
		   	\vspace{-0.4cm}
	\end{figure}

  \subsection{Spaceship Domain}
  In the Spaceship domain, the goal is to bring a spaceship to a fixed target location by controlling its thrust engines. Hence, as opposed to setting velocities in the Ball domain, here we set the forces. Our first task for this domain is Spaceship-Corridor, where the safe region is bounded between two infinite parallel walls. Our second task, Spaceship-Arena, differs from the first in the shape of the safe region; it is bounded by four walls in a diamond form. Images of the two tasks are given in Fig.~\ref{fig:spaceship domain}. Episodes terminate when one of the three events occur: the target is reached, the spaceship's bow touches a wall, or the time limit is reached. Time limits are $15$ seconds for Corridor and $45$ seconds for Arena. Relating to the screenshots in Fig.~\ref{fig:spaceship domain}, the spaceship's initialization location is uniformly random in the lowest third part of the screen for Corridor, and the right-most third part of the screen for Arena.
   
   \begin{figure}
   	\centering
   	\includegraphics[scale=0.26]{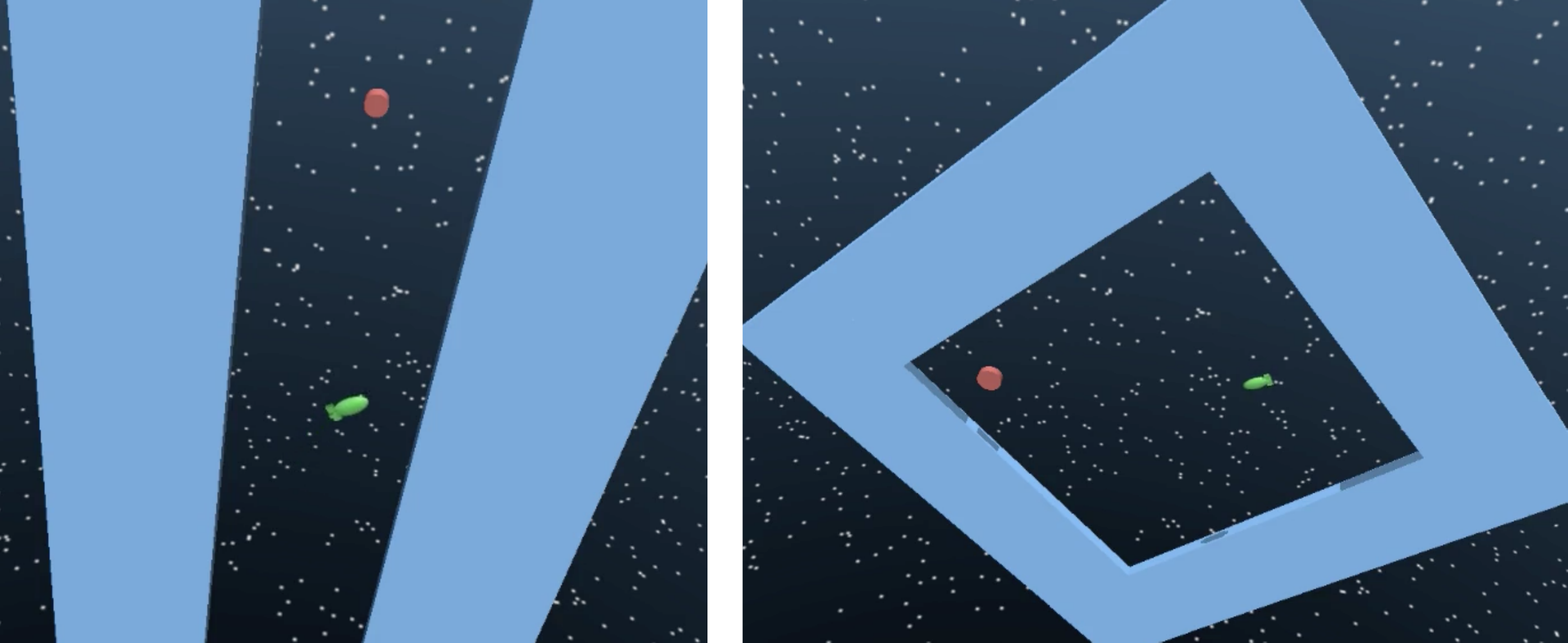}
   	\caption{Spaceship-Corridor (left) and Spaceship-Arena (right) tasks. The goal is to bring the green spaceship to to the pink rounded target by controlling its thrust engines. Touching the walls with the spaceship's bow terminates the episode.}
   	\label{fig:spaceship domain}
   	\vspace{-0.3cm}
   \end{figure}

  In this domain, the state is the spaceship's location and velocities; the action $a \in [-1,1]^2$ actuates two thrust engines in forward/backward and right/left directions; the transitions are governed by the rules of physics where damping is applied; the reward is sparse: 1000 points are obtained when reaching the target and 0 elsewhere; and $\gamma=0.99.$ As in the Ball domain, a small gap away from each wall is incorporated in the choice of $C_i$ in  \eqref{eq:optimization layer}. This is for the safety layer to begin correcting the actions a few time-steps before the spaceship actually reaches a wall with its bow. For both tasks, this gap is $0.05,$ where, for comparison, the distance between the walls in Corridor task is $1.$
  
 Since the control is via forces, the spaceship's dynamics are governed by a second-order differential equation. This domain thus represents situations such as pumping masses of water for cooling, and actuating objects such as robotic arms.
 
\begin{figure*}
	\begin{center}
		\includegraphics[scale=0.3]{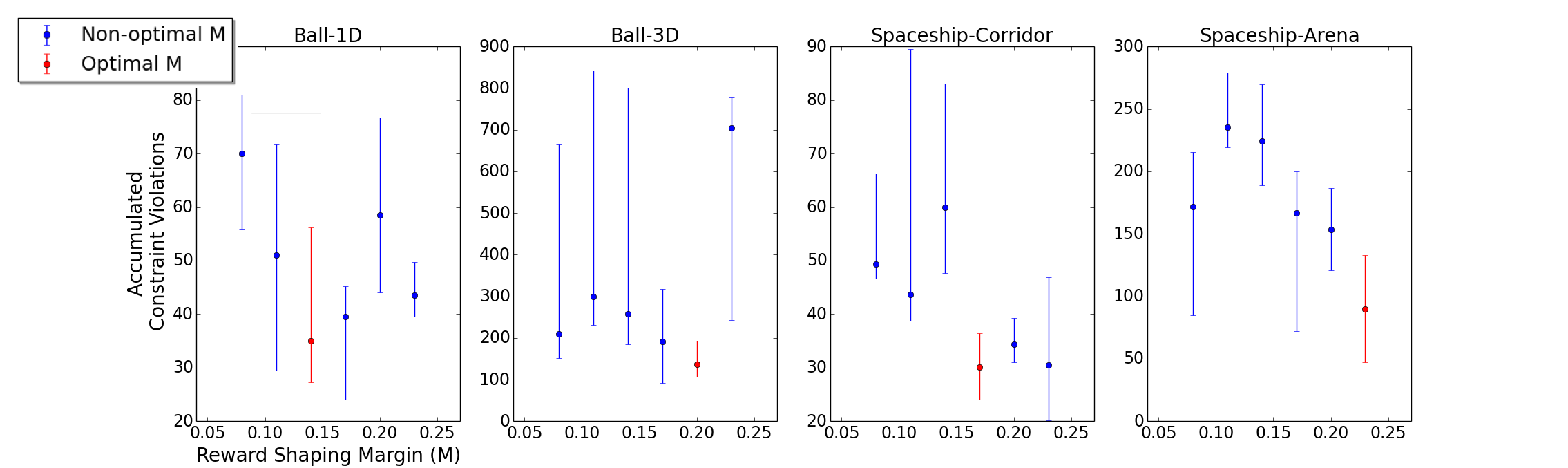}
	\end{center}
	\caption{Accumulated constraint violations (lower is better) throughout the training of DDPG+reward shaping, per each task. Plotted are medians with upper and lower quantiles of 10 seeds. The $x$-axis corresponds to different choice of $M$ -- the margin from limits in which reward penalty is incurred. The optimal choice of $M$ is colored red.}
	\label{fig:reward shaping}
\end{figure*}

   \subsection{Implementation Details}
   Per each of the four tasks above, we pre-train the safety-signal model for each of the task's constraints; i.e., we solve \eqref{eq:pre-training} for each $i\in [K]$ using that task's data-set $D.$ In a hyper-parameter sweep, we witnessed that the ability to achieve our end-goal is relatively insensitive to the architecture of $g(\cdot;w_i)$ and a simple one suffices. Namely, for achieving zero-constraint-violations after plugging $g(\cdot;w_i)$ into the safety layer, a single-layered NN with $10$ hidden neurons was adequate for all tasks. For training we use Adam optimizer \cite{kingma2014adam} with a batch size of $256$. We also report that low validation errors for $g(\cdot;w_i)$ don't necessarily correspond to a better safety layer; it is the exploratory nature of $D$ that matters. Specifically, we witnessed the following counter-intuitive behavior: generating $D$ with a policy that fixes a single randomly drawn action for each episode, as opposed to a random action in each step, yields better regression accuracy for $g(\cdot;w_i)$ but lesser safety layer performance. 
   
    Based on a hyper-parameter sweep for DDPG, for the actor and critic we use two-hidden-layer NNs of respective sizes $(100,100)$ and $(500,500).$ The rest of the experiment parameters were taken to be as in \cite{lillicrap2015continuous}.
   
   \subsection{Safety Layer versus Reward Shaping}
   Before exhibiting the performance of our safety layer, we first relate to a natural alternative approach for ensuring safety: manipulate the agent to avoid undesired areas by artificially shaping the reward. This can be done by setting the reward to large negative values in subsets of the state-space. In our case, those areas are the neighborhood of the enclosing boundaries. Therefore, for comparison, we first ran a series of such reward shaping experiments on DDPG without the safety layer. We set the penalty to be on the same scale as the original reward: $-1$ in the Ball domain and $-1000$ in the spaceship domain. The margin from boundaries in which the shaping occurs is a parameter $M$ which we cross-validated per each task. We experimented with values of $M \in \{0.08,0.11,0.14,0.17,0.2,0.23\}.$  Per each $M$, we ran DDPG, where each training episode is followed by an evaluation episode for which we count whether it terminated due to a constraint violation. Fig.~\ref{fig:reward shaping} gives, per each $M,$ the median with upper and lower quantiles of accumulated constraint violations of $10$ seeds of DDPG runs. Per each task, we mark the best choice of $M$ in red. 
   
   Fig.~\ref{fig:reward shaping} depicts the drawbacks of reward shaping for ensuring safety. The first fault is the failure to achieve our zero-constraint-violations goal; all parameter choices resulted in significant portions of the episodes terminating due to violations. The second drawback is the difficulty of choosing the correct parameter. As seen in the figure, there is no clear trend for the dependence on $M$; each task has a different 'best value', and the plots have no structure.

   Next, we compare the performance of our safety layer to that of the best reward shaping choice, and to no reward shaping at all. Namely, we compare the following alternatives: DDPG, DDPG+reward shaping, and DDPG+safety layer.  For reward shaping we use the same simulation results of the best $M$ (colored red in Fig.~\ref{fig:reward shaping}). The comparison outcomes are summarized in Fig.~\ref{fig:safety comparison}. Its top row gives the sum of discounted rewards per evaluation episode, and the bottom provides the number of constraint violations, accumulated over all evaluation episodes. The plots show medians along with upper and lower quantiles of $10$ seeds.
   \begin{figure*}
   	\begin{center}
   		\includegraphics[scale=0.3]{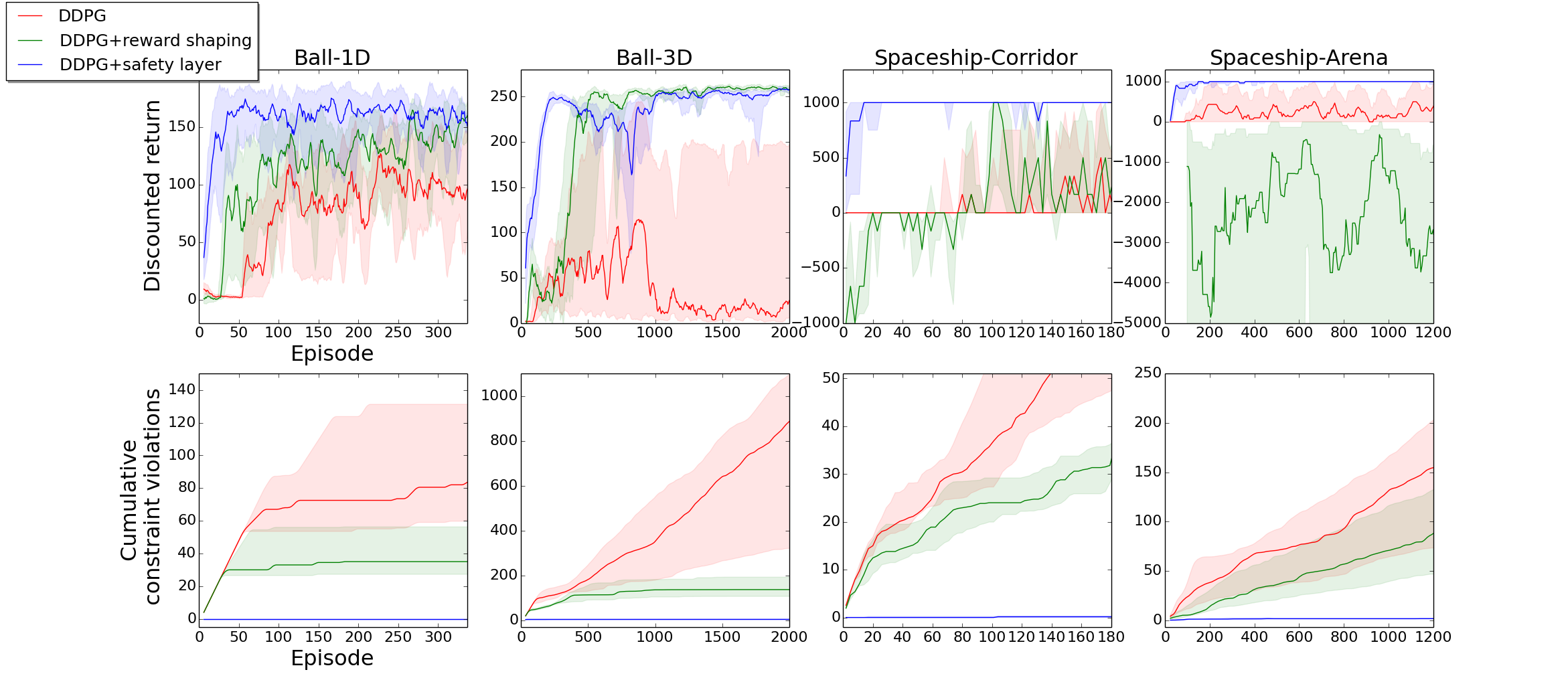}
   	\end{center}
   	\caption{Convergence and constraint violations of DDPG, DDPG+reward shaping, and DDPG+safety layer, per each task. Plotted are medians with upper and lower quantiles of 10 seeds. The top row shows the sum of discounted rewards, where the bottom gives the cumulative number of episodes terminated due to a constraint violation. As shown by the bottom blue curves, with the safety layer, constraints were never violated.}
   	\label{fig:safety comparison}
   \end{figure*}

   The most prominent insight from Fig.~\ref{fig:safety comparison} is that the constraints were never violated with the safety layer. This is true for all $10$ seeds of each of the four tasks. Secondly, the safety layer dramatically expedited convergence. For Spaceship, it is, in fact, the only algorithm that enabled convergence.  In contrast, without the safety layer, a significant amount of episodes ended with a constraint violation and convergence was often not attained. 
   This is due to the nature of our tasks: frequent episode terminations upon boundary crossing impede the learning process in our sparse reward environments. However, with the safety layer, these terminations never occur, allowing the agent to maneuver as if there were no boundaries.
   Next, the following domain-specific discussion is in order.

   	In the Ball domain, since in the 1D task the target is always located in one of two possible directions, its reward is less sparse compared to 3D. This easy setup allowed DDPG to converge to a reasonable return value even without reward shaping or the safety layer; in 3D this happened only for the upper quantile of the seeds. In both tasks, an improvement was obtained with reward shaping. Nonetheless, DDPG+safety layer obtained the highest discounted return with much faster convergence. As for cumulative constraint violations, as DDPG converged slower than DDPG+reward shaping, it also stabilized later on a higher value. All the same, DDPG+safety layer accomplished our safety goal and maintained $0$ accumulated constraint violations.
   	
   	In the Spaceship domain, the reward is obtained only once at the target, and the spaceship is re-initialized away from it with each constraint violation. This setup proves fatal for DDPG, which was not able to converge to any reasonable policy in both tasks. Surprisingly, reward shaping poses no improvement but rather has an adverse effect: it resulted in highly negative episodic returns. On the other hand, DDPG+safety layer converged extremely fast to a high-performing safe policy. This behavior stems from the closed-region type of tasks; exploring while straying away from the walls allowed the spaceship to quickly meet the target and then learn how to reach it. With regards to safety, DDPG+safety layer again prevailed where DDPG and DDPG+reward shaping failed, and maintained $0$ accumulated constraint violations.

   \subsection{Videos}
   To visualize the safety layer in action, we depict the intensity of its action correction via the magnitude of the dominant Lagrange multiplier $\lambda^*_{i^*}$ as calculated in \eqref{eq:optimal multiplier}. We do so by coloring the Ball and Spaceship objects in varying shades of red. We share two videos of the colored agents showcasing the following.
   
   	 In Video~1, to track the learning process of an agent with the safety layer, we recorded several episodes during training in the Spaceship-Corridor task. These exhibit how the maneuvers produced by the safety layer to dodge the walls are gradually learned by the agent, as demonstrated by the less frequent coloring of the spaceship red. Video~1 is found in    \url{https://youtu.be/KgMvxVST-9U}.
 	 	 In Video~2, per each task of the four, we show episodes of i) the first DDPG iteration (initialized with a random policy) with the safety layer off; ii) the same random initial policy but with the safety layer on; and iii) the final iteration of DDPG+safety layer, to demonstrate the task's goal. Video~2 is found in       \url{https://youtu.be/yr6y4Mb1ktI}.

   \section{Discussion}

   	In this work, we proposed a state-based action correction mechanism, which accomplishes the goal of zero-constraint-violations in tasks where the agent is constrained to a confined region. This is in contrast with the standard reward shaping alternative, which has failed in achieving the above goal. 
   	The resulting gain is not only in maintaining safety but also in enhanced performance in terms of reward. This suggests our method promotes more efficient exploration -- it guides the exploratory actions in the direction of feasible policies.
   	Since our solution is stand-alone and applied directly at the policy level, it is independent of the RL algorithm used 
   	and can be plugged into any other continuous control algorithm.
   	
   	Throughout this work, we relate to a preemption mechanism implemented in real-world critical systems, which halts the RL agent in borderline situations and replaces it with a safe-operation heuristic. The latter runs until the system is back to being far from operation limits. This heuristic is expected to be conservative and less efficient than the RL agent; hence, the contribution of our work can also be interpreted as in reducing operational costs by minimizing the number of times such takeovers occur.
   	
   	Lastly, an advantage of our approach over off-policy methods, often considered in industry, is that one does not need to know the behavior policy used to generate existing data logs. This is thanks to our single-step model, which we train on trajectories generated with random actions. These trajectories are always within operating limits (due to episode termination when the limits are crossed) and are independent of any particular policy whose long-term behavior can be elevated. Nevertheless, they carry rich information due to their exploratory nature. It is thus intriguing to study in future work additional types of pre-training data that are typical to specific real-world domains.


   \label{sec:discussion}

\bibliography{safety_layer}
\bibliographystyle{icml2017}

\end{document}